\newcommand*{\algotitle}[2]{%
  \stepcounter{algocf}%
  \hypertarget{algocf.title.\theHalgocf}{}%
  \NR@gettitle{#1}%
  \label{#2}%
  \addtocounter{algocf}{-1}%
}
\newsavebox{\theorembox}
\newsavebox{\lemmabox}
\newsavebox{\claimbox}
\newsavebox{\corollarybox}
\newsavebox{\propositionbox}
\newsavebox{\examplebox}
\newsavebox{\conjecturebox}
\newsavebox{\algbox}
\newsavebox{\qbox}
\newsavebox{\problembox}
\newsavebox{\definitionbox}
\newsavebox{\assumptionbox}
\newsavebox{\hypothesisbox}
\savebox{\theorembox}{\noindent\bf Theorem}
\savebox{\lemmabox}{\noindent\bf Lemma}
\savebox{\claimbox}{\noindent\bf Claim}
\savebox{\corollarybox}{\noindent\bf Corollary}
\savebox{\propositionbox}{\noindent\bf Proposition}
\savebox{\examplebox}{\noindent\bf Example}
\savebox{\conjecturebox}{\noindent\bf Conjecture}
\savebox{\qbox}{\noindent\bf Question}
\savebox{\definitionbox}{\noindent\bf Definition}
\savebox{\problembox}{\noindent\bf Problem}
\savebox{\assumptionbox}{\noindent\bf Assumption}
\savebox{\hypothesisbox}{\noindent\bf Hypothesis}
\newtheorem{claim}{\usebox{\claimbox}}
\newtheorem{definition}{\usebox{\definitionbox}}
\newcommand{\argmax}{\operatornamewithlimits{argmax}}
\ifcvprfinal\pagestyle{empty}\fi
\begin{document}
\title{OATM: Occlusion Aware Template Matching by Consensus Set Maximization
}

\author{Simon Korman$^{1,3}$\\
$^{1}$Weizmann Institute of Science\\
\and
Mark Milam$^{2}$\\
$^{2}$Northrop Grumman\\
\and
Stefano Soatto$^{1,3}$\\
$^{3}$UCLA Vision Lab\\
}

\maketitle
\begin{abstract}
We present a novel approach to template matching that is efficient, can handle partial occlusions, and comes with provable performance guarantees. A key component of the method is a reduction that transforms the problem of searching a nearest neighbor among $N$ high-dimensional vectors, to searching neighbors among two sets of order $\sqrt{N}$ vectors, which can be found efficiently using range search techniques. This allows for a quadratic improvement in search complexity, and makes the method scalable in handling large search spaces. The second contribution is a hashing scheme based on consensus set maximization, which allows us to handle occlusions. The resulting scheme can be seen as a randomized hypothesize-and-test algorithm, which is equipped with guarantees regarding the number of iterations required for obtaining an optimal solution with high probability. The predicted matching rates are validated empirically and the algorithm shows a significant improvement over the state-of-the-art in both speed and robustness to occlusions.
\end{abstract}

\vspace{-10pt}
\section{Introduction}

Matching a template $T$ (a small image) to a target $I$ (a larger image) can be trivial to impossible depending on the relation between the two. In the classical setup, when $I$ is a digital image and $T$ is a subset of it, this amounts to a search over the set of $N$ discrete 2D-translations, where $N$ would be the number of pixels in $I$
. When $T$ and $I$ are images of the same scene taken from different vantage points, their relation can be described by a complex deformation of their domain, depending on the shape of the underlying scene, and of their range, depending on its reflectance and illumination. For a sufficiently small template, such deformations can be approximated by an {\em affine} transformation of the domain (``warping''), and an affine (``contrast'') transformation of the range \ldots {\em except for occlusions:} An arbitrarily large portion of the template, including all of it, may be occluded and therefore have no correspondent in the target image. 

This poses a fundamental problem to many low-level tasks: To establish local correspondence (co-visibility), the template should be large, so as to be discriminative. But increasing the area increases the probability that its correspondent in the target image will be occluded, which causes the correspondence to fail, {\em unless occlusion phenomena are explicitly taken into account.}

In this work  we model occlusions explicitly as part of a robust template matching process where the co-visible region is assumed to undergo affine deformations of the domain and range, up to additive noise. We search for transformations that maximize {\em consensus,}  that is the size of the co-visible set, in a manner that is efficient and comes with provable convergence guarantees.

\begin{figure}\vspace{-2pt}
\begin{center}
\includegraphics[width = 1\linewidth]{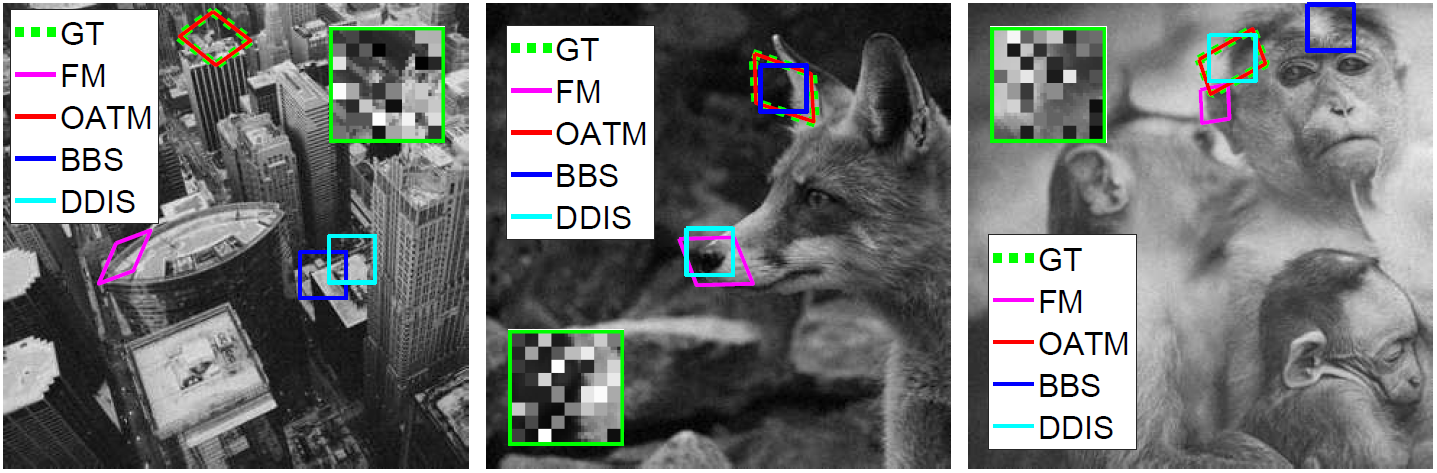}

\caption{$\hspace{-0.2pt}$\textbf{Instances of the occlusion experiment (Sec.~\ref{sec:occlusion.experiment})} A template (overlaid in green) that is 60\% occluded by random blocks is searched for in an image. {\footnotesize{\textbf{OATM}}} shows the best results in dealing with significant deformation and occlusion (use zoom for detail).}\label{fig.occlusion.types}
\end{center}\vspace{-17pt}
\end{figure}

Efficiency comes from the {first contribution} - a {\em reduction method} whereby the linear search of nearest neighbors for the $d$-dimensional template $T$ through $N$ versions in the target image is converted to a search among two {\em sets} of vectors, {\em with each set of size $O(\sqrt{N})$} (Sect. \ref{sec:reduction}). This reduces the search complexity from $O(N)$ to $O(\sqrt{N})$, which is practical even for very large search spaces, such as the discretized space of {\em affine transformations.}

For this method to work, we need a {\em hashing scheme that is compatible with  occlusions}, which we achieve by adapting the scheme of Aiger {\em et al.} \cite{aiger2013random}, leading to our {second contribution:} Rather than reporting close neighbors under the Euclidean $\ell_2$ norm, we are interested in reporting pairs of vectors that are compatible, up to a threshold, on a maximum (co-visibility) consensus set. Our hashing scheme is akin to a random consensus (RANSAC-type) procedure under the $\ell_\infty$ norm (Sect. \ref{sec:search}).

Finally, our {third contribution} is an analysis of the algorithm (Sect. \ref{sec:analysis}), specifically regarding guarantees on the number of candidate hypotheses required for obtaining the optimal solution, in the sense of maximal inlier rate, within a certain probability.

While for many low-level vision tasks \emph{speed}, not convergence guarantee, is the key, there are applications where being able to issue a certificate is important, such as high-assurance visual pose estimation for satellite maneuvering. In our case, we achieve both speed and assurance, all the while being able to handle occlusions, which allows using larger, and therefore more discriminative, templates.

The algorithm is rather generic and is presented for a general geometric transformation of the domain space, while possible explicit decompositions are given for the 2D-translation and 2D-affine groups.
In the experimental section, our algorithm is shown empirically to outperform the state-of-the-art in affine template matching \cite{korman2013fast} both in terms of efficiency and robustness to occlusion. In addition, it shows some clear advantages over some modern image descriptors on the recent HPatches \cite{hpatches_2017_cvpr} benchmark.

\subsection{Related work}

Research in template matching algorithms has focused heavily on efficiency, a natural requirement from a low level component in vision systems. This was largely achieved in the limited scope of 2D-translation and $\ell_p$ similarity, where full-search-equivalent algorithms accelerate naive full-search schemes by orders of magnitude \cite{ouyang2012performance}.
unlike in real-time applications, such as robotic navigation and augmented reality, there are applications where accuracy and performance guarantees are important, such as high-assurance pose estimation for high-value assets, such as satellites or industrial equipment. This requires extending the scope of research in several aspects.

One line of works focuses on \emph{geometric deformations} due to camera or object motion.
Early works such as \cite{Fredriksson-PhD,tsai2002rotation} extend the sliding window approaches to handle rotation and scale.
The Fast-Match algorithm~\cite{korman2013fast} was designed to handle 2D-Affine transformations. It minimizes the sum-of-absolute-differences using branch-and-bound, providing probabilistic global guarantees. \cite{zhang2015fast} uses a genetic algorithm to sample the 2D-affine space.

To achieve \emph{photometric invariance}, \cite{hel2014matching} introduced a fast scheme for matching under non-linear tone mappings, while \cite{elboer2013generalized} used the Generalized Laplacian distance, which can handle multi-modal matching. Our method can provide affine photometric invariance, i.e., up to global brightness and contrast changes.

In this work we propose a \emph{quadratic} improvement upon the runtime complexity of these methods, which depends linearly on the size of the search-space ({\em i.e.,} exponential in its dimension).
More recently we are seeing attempts at matching under 2D-homographies using deep neural networks~\cite{detone2016deep,nguyen2017unsupervised}, although these methods do not provide any guarantees and like the previously mentioned methods - they were not designed to handle \emph{partial occlusion}.

Two recent works can handle \emph{both} geometric deformations and partial occlusion through similarity measures between rectangular patches: the \noindent Best Buddies Similarity (BBS) measure \cite{dekel2015best},  based on maximizing the number of mutual nearest-neighbor pixel pairs, and Deformable Diversity Similarity (DDIS) \cite{talmi2016template}, that examines the nearest neighbor field between the patches. DDIS dramatically improves the heavy runtime complexity of BBS, but is limited in the extent of deformation it can handle, since it penalizes large deformations. Also, the sliding window nature of these methods limits the extent of occlusion they can handle. While OATM is limited to handling \emph{rigid} transformations, it is provably able to efficiently handle high levels of deformation and occlusion.

Another relevant and very active area of research is learning discriminative descriptors for image patches (natural patches or those extracted by feature detectors), from the earlier {SIFT}~\cite{lowe1999object} and variants \cite{calonder2010brief,rublee2011orb} to the more recent \cite{simo2015discriminative,balntas2016learning,han2015matchnet}. We show OATM to be superior in its ability to match under significant deformation and occlusion.

Lastly, the problem of occlusion handling was addressed in many other areas of computer vision, including
tracking \cite{zhang2015robust,zhang2014partial,joshi2007synthetic}, segmentation \cite{yang2015self},
image matching \cite{yang2015coarse}, multi-target detection \cite{Baque_2017_ICCV}, flow \cite{Hur_2017_ICCV} and recognition \cite{Osherov_2017_ICCV}.

Within a landscape of ``X-with-deep-learning'' research, our work is counter-tendence: We find that the need to provide provable guarantees in matching, albeit relevant to niche applications, is underserved, and data-driven machine learning tools are not ideally suited to this task.

\section{Method}

\subsection{Problem Definition}

In template matching, one assumes that a template $T$ and an image $I$ are related by a geometric transformation of the domain  $F=\{f:\mathbb{R}^2\rightarrow\mathbb{R}^2\}$ and a photometric transformation of the range space. The goal is to determine the transformation of the domain, despite transformations of the range. 
Here we assume that both $T$ and $I$ are discretized, real valued, square images, and hence can be written as $T:\{1,\ldots,n\}^2\rightarrow\mathbb{R}$ (and similarly $I:\{1,\ldots,m\}^2\rightarrow\mathbb{R}$), where $T$ and $I$ are $n\times n$ and $m\times m$ images, respectively.
The set of transformations $F$ can be approximated by a discrete set of size $N$, possibly large, up to a desired tolerance. For example, in the standard 2D-translation setup, the set $F$ contains all possible placements of the template over the image at single pixel offsets, and hence $N=|F|\approx (m-n)^2$ with a tolerance of one pixel. Moreover, in our analysis we will assume nearest-neighbor interpolation (rounding) which allows us to simplify the discussion to fully discretized transformations of the form $f:\{1,\ldots,n\}^2\rightarrow\mathbb{Z}^2$.

With a slight abuse of notation we indicate with $p\in T$ (and likewise $q\in I$) a pixel $p$ in the template domain $\{1,\ldots,n\}^2$ and $T(p)$ will denote its real valued intensity.

For a given transformation $f$, we define the (photometric) residual, or reprojection error, at pixel $p\in T$ by $res_f(p) = |T(p)-I(f(p))|$. 
The known ``brightness constancy constraint" guarantees that the residual can be made small (to within a threshold) by at least one transformation $f$. However, it is only valid for portions of the scene that are Lambertian, seen under constant illumination and most importantly: co-visible (unoccluded).

We are now ready to pose Occlusion-Aware Template Matching (OATM) as a Consensus Set Maximization (CSM) problem, where we search for a transformation under which a maximal number of pixels are co-visible, {\em i.e.,} mapped with a residual that is within the a threshold.

\begin{definition}
\emph{[{Occlusion-Aware Template Matching (OATM)}]}
For a given error threshold $t$, find a transformation $f^*$ given by:\vspace{-7pt}
\begin{equation}\label{eq.OATM}
f^* = \argmax_{f\in F} \sum_{p\in T} [res_f(p)\le t]\vspace{-9pt}
\end{equation}
where $[\cdot]$ represents the indicator function.
\end{definition}

Our reduction to a product space relies extensively on a distance notion between geometric transformations (which depends on the source domain - the template $T$).
\begin{definition}\label{def.distance} \emph{[Distance $\Delta$ between transformations]}
Let $f_1,f_2\in F$. We define the distance $\Delta(f_1,f_2)=\max_{p\in T}\|f_1(p)-f_2(p)\|$
where $\|\cdot\|$ represents the Euclidean distance in the (target) domain of the image $I$.
\end{definition}

\subsection{Reduction to a Product Space}
\label{sec:reduction}

Recall (Equation \eqref{eq.OATM}) that our goal is to find an optimal transformation $f^*$, one whose residual \begin{equation}\label{eq.residual.orig}
res_{f^*}(p) = |T(p)-I(f^*(p))|
\end{equation}
is below a threshold $t$ at as many pixels $p\in T$ as possible. In order to optimize \eqref{eq.OATM} we would need to compare $T$ to $N$ possible target vectors $I(f(T))$ (all possible transformed templates in the target image).

The main idea here will be to enumerate the search space in a very different way. On the source image side we define a set $U$ of templates (vectors) obtained by local perturbations of the template $T$, while on the target side we define a set $V$ of templates that ``covers" the target image $I$ in a sense that every target template location will be close to one of those in $V$. In such a way, if a copy of the template appears in the image, there must be a pair of similar templates (vectors) $u\in U$ and $v\in V$. Refer to  Figure \ref{fig.illust.decomp} to get the intuition for the 2D-translation case.

Formally, for a given tolerance $\epsilon > 0$, let $f\in F$ be a transformation such that $\Delta(f,f^*)<\epsilon$.
For an arbitrary $p'\in T$, if we assume the existence of some $p\in T$ such that $f(p)=f^*(p')$, which is the case in our model under the assumption of co-visibility,
by substituting $p'={f^*}^{-1}(f(p))$ in Equation~\eqref{eq.residual.orig}, we get:

\begin{equation}\label{eq.residual.2}
res_{f^*}(p') = |T({f^*}^{-1}(f(p)))-I(f(p))|\;.
\end{equation}

If we set $h={f^*}^{-1}\circ f$, we can write:
\begin{equation}\label{eq.residual.3}
  res_{f^*}(p') = |T(h(p))-I(f(p))|
\end{equation}
for pixels $p$ in the sub-template $T_h=\{p\in T\: : \; h(p)\in T\}$, for which $h(p)=p'\in T$.

Regarding $h$, since we know that $\Delta(f,f^*)<\epsilon$, it is easy to see that $\Delta(h,id)<\epsilon/s(f^*)$, where $id$ is the identity transformation and $s(f^*)$ is the minimal scale of $f^*$, defined by $s(f)=\min_{p\in T}\|f(p)\|/\|p\|$.

If we call $\epsilon' = \epsilon/s(f^*)$ we can now define the restricted subset of functions (which is a \emph{ball} of radius $\epsilon'$ around the identity, in the function space $F$):
\begin{equation}
F_{\epsilon'}=\{h\in F\; : \; \Delta(h,id)<\epsilon'\}
\end{equation}
Let $Net_\epsilon(F)$ be an arbitrary $\epsilon$-net over the space $F$, with respect to the distance $\Delta$. Namely, for any $f\in F$ there exists some $f'\in Net_\epsilon(F)$ such that $\Delta(f,f')<\epsilon$.

The result is that we have decomposed the search for an optimal $f^*\in F$ in Eq.~\eqref{eq.OATM}, to the search of the equivalent (recall that $h={f^*}^{-1}\circ f$) optimal pair $(h,f)$ in the product space $F_{\epsilon'} \times Net_\epsilon(F)$. Namely, we can reformulate the OATM problem (Equation \eqref{eq.OATM}) as:
\begin{equation}\label{eq.OATM2}
\hspace{-2pt}f^* = \;\argmax_{\substack{h\in F_{\epsilon'} \\ f\in Net_\epsilon(F)}} \;\sum_{p\in T_h} \frac{1}{|T_h|}\Big[|T(h(p))-I(f(p))|\le t\Big]
\end{equation}
For simplicity of description and implementation we can work with a fixed subtemplate $T'$ of $T$, defined by the intersection of all sub-templates $\{T_h\}_{h\in F_{\epsilon'}}\;$, which results in:
\begin{equation}\label{eq.OATM2}
f^* = \;\argmax_{\substack{h\in F_{\epsilon'} \\ f\in Net_\epsilon(F)}} \;\sum_{p\in T'} \Big[|T(h(p))-I(f(p))|\le t\Big]
\end{equation}
It may appear that, up to this point, we stand to gain nothing, since under any reasonable discretization of the transformation sets $Net_\epsilon(F)$ and $F_{\epsilon'}$, it holds that $|F|\approx |Net_\epsilon(F)| \cdot |F_{\epsilon'}|$, i.e. that the size of the search space remains unchaged. However, this decomposition allows us to design preprocessing schemes for two sets of vectors\footnote{$h(T')$ and $f(T')$ are shorthands for $\{h(p)\}_{p\in T'}$ and $\{f(p)\}_{p\in T'}$}
\begin{eqnarray}
  U &=& \{T(h(T'))\}_{h\in F_{\epsilon'}} \label{eq.setU} \\
  V &=& \{I(f(T'))\}_{f\in Net_\epsilon(F)} \label{eq.setV}
\end{eqnarray}
 in a manner that enables an efficient search over the terms $|T(h(p))-I(f(p))|$ from \eqref{eq.OATM2} for all $(h,f)\hspace{-2pt}\in\hspace{-2pt} F_{\epsilon'} \hspace{-2pt}\times \hspace{-2pt}Net_\epsilon(F)$. Efficiency comes from designing the product space in a way that the sets $U$ and $V$ have approximately equal size ($\sqrt{N}$) and from using a search algorithm whose complexity depends on the \emph{sum} of the space sizes (order $\sqrt{N}$), and not on their \emph{product} (of size $N$). We provide explicit decompositions for the 2D-translation and 2D-affine spaces.

\subsection{Search by Random Grid based Hashing}
\label{sec:search}

We have transformed the problem of matching between a single vector and $N$ target vectors to that of finding matching vectors between two sets of $\thicksim\hspace{-4pt}\sqrt{N}$ vectors. Matching between a pair of high-dimensional point sets is a classical problem in the search literature, clearly related to the problem of finding all close neighbors in a single point-set. Our approach is based on random grid hashing \cite{aiger2013reporting} - an algorithm that is straightforward to implement and which has been shown to work well in practice \cite{aiger2013random}.

In \cite{aiger2013reporting}, to hash a collection of $d$ dimensional points, the space is divided into cells by laying a randomly shifted uniform grid (each cell is an axis-parallel cube with side-length $c$). The points are arranged accordingly in a hash table and then all pairs of points that share an entry in the hash table are inspected, reporting those whose distance is below the specified threshold. The process is then repeated a suitable number of times in order to guarantee, with high probability, that all or most pairs of close points are reported.

Unlike the work of Aiger \etal~\cite{aiger2013reporting,aiger2013random} that uses the $\ell_2$ norm to measure the similarity between vectors, we use the number of coordinates whose absolute difference is below a threshold. Furthermore, we replace the dimensionality reduction in \cite{aiger2013random} (a Johnson--Lindenstrauss transform) by a random choice of a small number of coordinates (pixels), in order to enable matching under occlusions. These changes require a different analysis of the algorithm.
Refer to Algorithm~\ref{alg:basic.hash} for a summary of our basic hashing module.

\begin{algorithm}[t]\vspace{-0pt}
\algotitle{IRE}{alg.basic.hash}
\SetAlgoLined
\SetKwInput{Input}{input}\SetKwInput{Output}{output}

\hrulefill

\Input{Sets $U$ and $V$ of vectors in $\mathbb{R}^d$; threshold $t$;}

\Output{A vector pair $(u,v)\in U\times V$ with maximal found consensus set (inlier rate)}

\textbf{parameters:} Sample dimension $\hat{d}$; cell dimension $c$;\vspace{-5pt}

\hrulefill

\hspace{-6pt}\parbox{.91\columnwidth}{
\begin{enumerate}\setlength\itemsep{-2pt}
\item Pick $\hat{d}$ random dimensions out of $1,\ldots,d$.
\item Let $\hat{U}$ and $\hat{V}$ be the vector sets $U$ and $V$ reduced to the $\hat{d}$ random dimensions.
\item Generate a random $\hat{d}$-dimensional offset vector ${o}$ in $[0,c]^{\hat{d}}$.
\item Map each vector in $\hat{U}$ and $\hat{V}$ into a $\hat{d}$-dimensional integer, according to $Map(\hat{v})=\lfloor (\hat{v} + {o})/c \rfloor$.
\item Arrange the resulting integers into a hash table using any hash function from $\mathbb{N}^{\hat{d}}$ to $\{1,\ldots,|U|\}$.
\item Scan the hash table sequentially, where for each pair of vectors $\hat{u}$ and $\hat{v}$ that share a hash value, count the number of inlier coordinates in $i\in\{1,\ldots,d\}$ (those for which $|u(i)-v(i)|\le t$).
\item Return a pair ${u,v}$ with maximal found inlier rate
\end{enumerate}\vspace{-8pt}
}

\hrulefill

\medskip
\caption{\small Consensus Set Maximization in vector sets.\label{alg:basic.hash}} %
\end{algorithm}\vspace{-1pt}

\begin{figure*}\vspace{-12pt}
\begin{center}
\addtolength{\tabcolsep}{21pt}
\begin{tabular}{ c  c}
    \hspace{-21pt}\includegraphics[width = 0.42\linewidth]{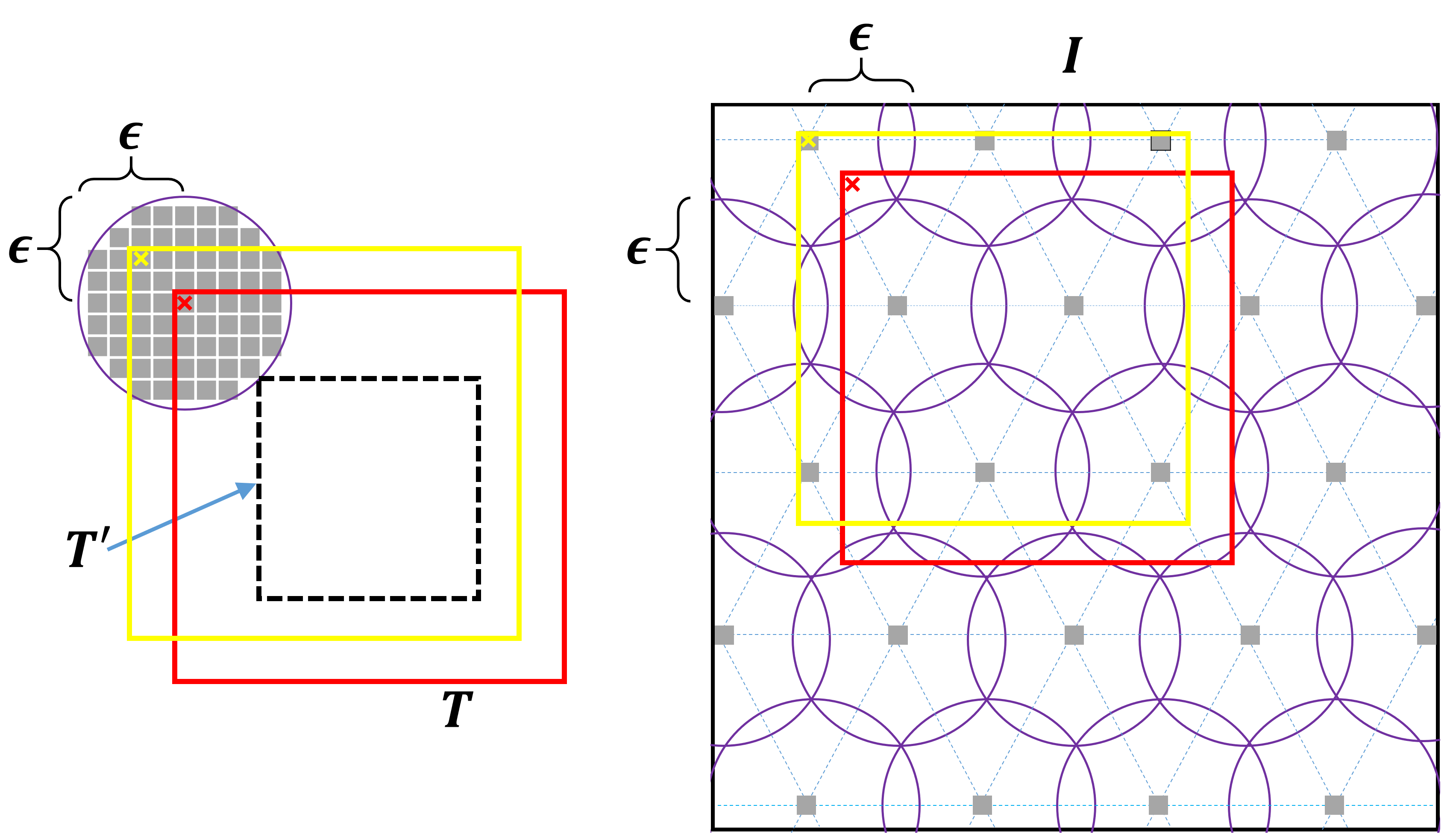} &
    \includegraphics[width = 0.42\linewidth]{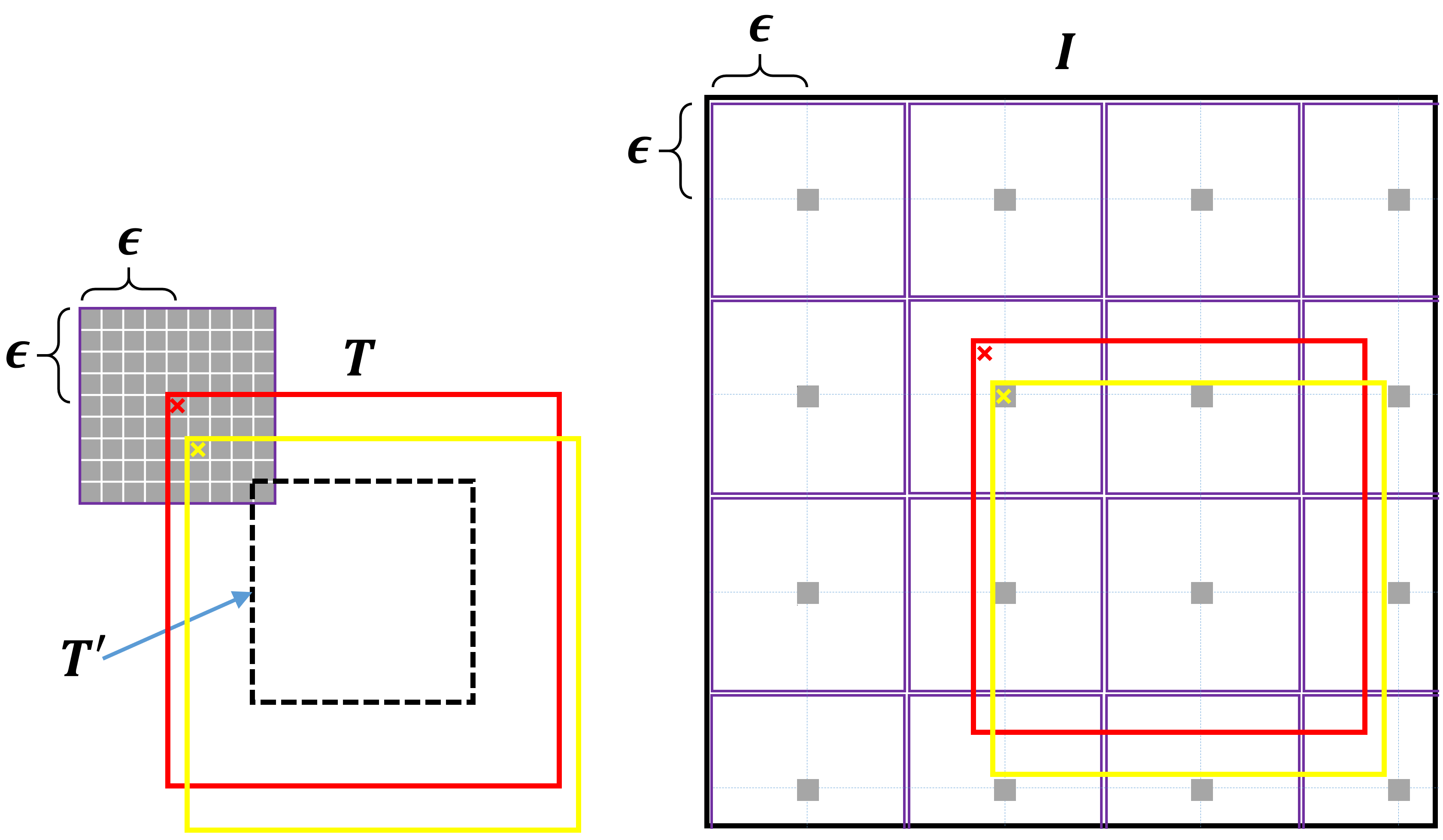} \\ \vspace{-12pt}\\
    (a) $\epsilon$-net construction & (b) simple construction\\ \vspace{-8pt}\\
\end{tabular}\vspace{-1pt}
\caption{\textbf{Illustration of two possible decompositions for 2D-translation}. In each of (a) and (b) the sets of sampled vectors (templates) $U$ (from $T$) and $V$ (from $I$) are represented by gray pixels which denote the top left corner position of the sampled templates. If the Template (red) appears in the target image, there will be a respective pair of matching samples in $U$ and $V$ (shown in yellow). The parameter $\epsilon$ is taken such that the number of samples (number of gray squares) on both sides is approximately equal (both approximately $\sqrt{N}$).}\label{fig.illust.decomp}
\end{center}\vspace{-18pt}
\end{figure*}

\subsection{Analysis}
\label{sec:analysis}

The main result needed for a high-assurance template matcher is a guarantee on the success probability of Algorithm~\ref{alg:basic.hash}. The following term will be used in our claims:
\begin{equation*}\label{eq.hypergeometric}
P(\alpha,d,\hat{d}) = \frac{\binom{\alpha d}{\hat{d}}}{\binom{d}{\hat{d}}}=\frac{\alpha d\cdot (\alpha d-1)\cdot\ldots\cdot(\alpha d-\hat{d}+1)}{ d\cdot ( d-1)\cdot\ldots\cdot( d-\hat{d}+1)}
\end{equation*}

\begin{claim} \label{Clm.main}
\textnormal{\text{\textbf{[analysis of Algorithm~\ref{alg:basic.hash}]}}}
Algorithm~\ref{alg:basic.hash} succeeds (reports a pair ${u,v}\in U\times V$ with maximal possible inlier rate of $\alpha$) with probability at least
\vspace{-4pt}\begin{equation}\label{eq.success.probab.simple.alg1}
P(\alpha,d,\hat{d})\cdot \left( 1-\frac{t}{c} \right) ^{\hat{d}}
\end{equation}
\end{claim}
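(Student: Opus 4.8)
The plan is to track a single distinguished pair and lower-bound the probability that Algorithm~\ref{alg:basic.hash} both hashes it into one cell and inspects it. Fix an optimal pair $(u^*,v^*)\in U\times V$, i.e.\ one whose inlier set $S^*=\{i : |u^*(i)-v^*(i)|\le t\}$ has size $|S^*|=\alpha d$, the maximal attainable inlier rate. If the reduced vectors $\hat u^*,\hat v^*$ land in the same hash entry, then in Step~6 the pair is inspected, its full inlier count over all $d$ coordinates is evaluated, and since $\alpha$ is maximal the algorithm reports inlier rate $\alpha$; so it suffices to lower-bound the collision probability of $(u^*,v^*)$. I would decompose this into two events driven by \emph{independent} sources of randomness: $E_1$, that all $\hat d$ sampled coordinates fall inside $S^*$ (a function only of the random dimension choice in Step~1), and $E_2$, that the random grid of Steps~3--4 sends $\hat u^*$ and $\hat v^*$ to the same integer (a function only of the offset $o$). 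A collision occurs whenever $E_1\cap E_2$ holds, so the success probability is at least $\Pr[E_1]\cdot\Pr[E_2\mid E_1]$.

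For $E_1$, the $\hat d$ coordinates are drawn uniformly without replacement from the $d$ available, so the probability that all of them fall among the $\alpha d$ inlier coordinates is the hypergeometric ratio $\binom{\alpha d}{\hat d}/\binom{d}{\hat d}=P(\alpha,d,\hat d)$, which is the first factor exactly.

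For $E_2$, condition on $E_1$, so every sampled coordinate $i$ satisfies $|u^*(i)-v^*(i)|\le t$. The map $x\mapsto\lfloor(x+o_i)/c\rfloor$ separates two values $a\le b$ precisely when some grid breakpoint $kc$ lies in $(a+o_i,\,b+o_i]$, which occurs iff $o_i$ falls (modulo $c$) in an interval of length $b-a\le t$; since $o_i$ is uniform on $[0,c]$ this separation probability is $(b-a)/c\le t/c$, giving per-coordinate collision probability at least $1-t/c$. The $\hat d$ offset components are independent, so all sampled coordinates collide jointly with probability at least $(1-t/c)^{\hat d}$, i.e.\ $\Pr[E_2\mid E_1]\ge(1-t/c)^{\hat d}$. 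Multiplying the two factors yields the claimed bound $P(\alpha,d,\hat d)\,(1-t/c)^{\hat d}$.

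I expect the single-coordinate grid estimate to be the main point requiring care: one must argue, using uniformity of the offset modulo $c$ (and implicitly $t\le c$, so that the factor is nonnegative), that a breakpoint separates two points at distance at most $t$ with probability at most $t/c$, and that this holds for the worst-case placement of the pair rather than a typical one. The remaining work is bookkeeping: confirming that the dimension sampling of Step~1 and the offset of Step~3 are genuinely independent, so the product of probabilities is legitimate, and noting that a collision of $(u^*,v^*)$ is \emph{sufficient} for success even though the reported maximal pair need not be $(u^*,v^*)$ itself.
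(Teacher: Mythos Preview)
Your proposal is correct and follows essentially the same decomposition as the paper: fix an optimal pair, bound the probability that (i) all $\hat d$ sampled coordinates lie in its inlier set (the hypergeometric factor $P(\alpha,d,\hat d)$) and (ii) the random grid offset keeps the reduced vectors in the same cell (the $(1-t/c)^{\hat d}$ factor), then multiply using independence of the two randomness sources. Your writeup is in fact more explicit than the paper's on the per-coordinate breakpoint argument, the implicit assumption $t\le c$, and the observation that collision of the tracked pair is sufficient even if a different maximal pair is ultimately reported.
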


\begin{proof}
The derivation is straightforward, since the algorithm succeeds if a pair of optimal matching vectors ${u,v}$ collide in the hash table. A collision is guaranteed to occur, given a combination of two events. First, the event that the set of the $\hat{d}$ sampled dimensions is a subset of the $\alpha d$ inlier dimensions. This occurs with probability $P(\alpha,d,\hat{d})$, since this is a hyper-geometric distribution with $\alpha d$ success items among a population of $d$, with $\hat{d}$ samples all required to be success items. Second, we need to multiply by the probability that a collision occurred subject to the randomness in the grid offset. In this case, the $\hat{d}$-dimensional $\hat{u}$ and $\hat{v}$ differ by at most $t$ in each coordinate. Therefore, and since the offset is uniform and independent between coordinates, $\hat{u}$ and $\hat{v}$ are mapped into the same cell (and hence collide in the hash table) with probability at least $(\frac{c-t}{c})^{\hat{d}}=(1-\frac{t}{c})^{\hat{d}}$.
\end{proof}

\begin{claim} \label{Clm.main.strong}
\textnormal{\text{\textbf{[analysis of Algorithm~\ref{alg:basic.hash} - \normalfont{stronger version}]}}}
Assume there exists a pair ${u,v}\in U\times V$ which are identical up to a zero-mean Gaussian noise with standard deviation $\sigma$ at an $\alpha$-fraction of their coordinates.  Algorithm~\ref{alg:basic.hash} succeeds (reports a pair ${u,v}\in U\times V$ with inlier rate at least $\alpha$) with probability at least
\begin{equation}\label{eq.success.probab.alg1}
 P(\alpha,d,\hat{d})\cdot \Big(\int_{0}^{c}(1-\frac{x}{c})\cdot \frac{\sqrt{2}}{\sigma\sqrt{\pi}}\cdot exp(-\frac{x^2}{2\sigma^2})dx\Big)^{\hat{d}}\:
\end{equation}
\end{claim}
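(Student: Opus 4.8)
The plan is to mirror the two-event decomposition used in the proof of Claim~\ref{Clm.main}, changing only the second (grid-collision) factor to account for the Gaussian noise model; the point of the ``stronger'' version is precisely to replace the worst-case bound $(1-t/c)^{\hat d}$ by an expectation over the noise. As before, the algorithm succeeds whenever the distinguished good pair $(u,v)$ collides in the hash table, and this collision is the conjunction of two independent events: (i) the $\hat d$ sampled dimensions all fall inside the $\alpha d$ coordinates on which $u$ and $v$ agree up to noise, and (ii) conditioned on (i), the reduced vectors $\hat u$ and $\hat v$ are mapped into the same cell. Event (i) is governed by exactly the same hypergeometric argument as in Claim~\ref{Clm.main}, contributing the factor $P(\alpha,d,\hat d)$, so no new work is needed there.

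The substance is the collision factor (ii). First I would analyze a single sampled coordinate $i$. Writing $X=u(i)-v(i)$, the noise assumption gives $X\sim\mathcal{N}(0,\sigma^2)$. For a fixed value of the absolute difference $|X|=x$, the one-dimensional grid argument from Claim~\ref{Clm.main} applies: two points separated by $x$ lie in the same cell of a uniformly shifted grid of side $c$ precisely when no cell boundary falls in the separating interval, which (since the offset is uniform on $[0,c]$) happens with probability $(1-x/c)$ for $x\le c$ and $0$ for $x>c$. Thus the per-coordinate collision probability, conditioned on $|X|=x$, is the positive part $(1-x/c)_+$.

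Next I would remove the conditioning by integrating against the law of $|X|$. Since $X$ is zero-mean Gaussian, $|X|$ is half-normal with density $\frac{\sqrt{2}}{\sigma\sqrt{\pi}}\exp(-x^2/2\sigma^2)$ on $[0,\infty)$, so the marginal collision probability in one coordinate equals $\int_0^c(1-x/c)\,\frac{\sqrt{2}}{\sigma\sqrt{\pi}}\exp(-x^2/2\sigma^2)\,dx$, the truncation at $c$ being exactly the positive-part cutoff. Because the noise is independent across coordinates and the grid offset is drawn independently in each coordinate, the $\hat d$ sampled coordinates collide independently, so the collision factor (ii) is this integral raised to the power $\hat d$. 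Multiplying the two independent factors (i) and (ii) yields the stated bound.

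I expect the only genuinely delicate points to be bookkeeping rather than conceptual: correctly deriving the half-normal density constant $\sqrt{2}/(\sigma\sqrt{\pi})$ from folding the two-sided Gaussian, and justifying the truncation of the integral at $x=c$ through $(1-x/c)_+$, which encodes that coordinates differing by more than one cell width can never collide regardless of the offset. One should also note, as in Claim~\ref{Clm.main}, that the quantity computed is really the probability that the good pair is \emph{inspected} (collides); since the algorithm returns the inspected pair of maximal inlier rate, this lower-bounds the probability of reporting a pair of inlier rate at least $\alpha$.
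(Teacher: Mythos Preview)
Your proposal is correct and follows essentially the same route as the paper's proof: you reuse the hypergeometric factor $P(\alpha,d,\hat d)$ from Claim~\ref{Clm.main} unchanged and replace the per-coordinate collision probability by integrating $(1-x/c)$ against the folded (half-)normal density of the absolute difference, then take the $\hat d$-th power by independence. The paper's own proof is terser but makes exactly these moves, citing the folded-Gaussian distribution for $|X|$ and integrating over $[0,c]$.
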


\begin{proof}
The only difference here compared to the previous claim is regarding the probability of vectors of inlier coordinates falling into a single cell. The difference is in the definition of inliers, where here we not only assume a maximal absolute difference of $t$ at each coordinate but we rather make the stronger (but realistic) assumption that the vectors at inlier coordinates differ only due to Gaussian noise of a known standard deviation. In such a case, the absolute difference per coordinate follows a folded Gaussian distribution (see e.g. \cite{leone1961folded}), and therefore we integrate over the possible absolute differences $x$ in the range $[0,c]$. 
\end{proof}

\subsection{Occlusion-Aware Template Matching}

\begin{algorithm}[b]\vspace{-8pt}
\algotitle{IRE}{alg.oatm}
\SetAlgoLined
\SetKwInput{Input}{input}\SetKwInput{Output}{output}

\hrulefill

\Input{template $T$ and image $I$; threshold $t$; $\qquad$family of transformations $F$ (of size $N$);} 

\Output{$f\in F$ with maximum consensus (Eq.~\eqref{eq.OATM})}

\hrulefill

\hspace{-6pt}\parbox{.91\columnwidth}{
\begin{enumerate}\setlength\itemsep{-2pt}
\item Decompose $F$ into the product $F_{\epsilon'} \times Net_\epsilon(F)$ choosing an $\;\epsilon\;$ s.t. $|F_{\epsilon'}|\approx |Net_\epsilon(F)|\approx \sqrt{N}$.\label{step.decompose}
\item Construct the vector sets $U$ and $V$ (Eqs.~\eqref{eq.setU}-\eqref{eq.setV}). 
\item\textbf{repeat} Algorithm~\ref{alg:basic.hash} for $k$ times (with $U$, $V$ and $t$) to obtain transformations $\{f_i\}_{i=1}^k$.
\item\textbf{return} the transformation $f_i$ with largest consensus set (Eq.~\eqref{eq.OATM}).
\vspace{-8pt}
\end{enumerate}
}

\hrulefill

\medskip

\caption{\small{\textbf{OATM}:} {{Occlusion Aware Template Matching}\vspace{-21pt}}\label{alg:oatm}}

\end{algorithm}\vspace{-0pt}

Given Algorithm~\ref{alg:basic.hash} and its performance guarantees, we can now specify our complete OATM template matching algorithm. The template matcher will run Algorithm~\ref{alg:basic.hash} a certain number of times and return the target location in the image, which corresponds to the overall best pair of vectors found. As a reminder, Algorithm~\ref{alg:basic.hash} returns a pair of vectors which are of the form $\{T(h(p))\}_{p\in T'}$ and $\{I(f(p))\}_{p\in T'}$, which suggests the pair of transformations $(h,f)$ as a candidate solution, from which a single transformation $f^*=f\circ h^{-1}$ can be extracted.

There are two reasons to evaluate directly the inlier rate $P^* = \frac{1}{|T|}\sum_{p\in T}\big[ |T(p)-I(f^*(p))|\le t\big]$ instead of the proxy $\frac{1}{|T'|}\sum_{p\in T'}\big[|T(h(p))-I(f(p))|\le t\big]$. One is to avoid interpolation errors by applying the concatenated transformation $f^*=f\circ h^{-1}$ directly. The second and more important one is that the detected inlier rate reflects only pixels of $T'$ in a sub-template of $T$.

Occlusion-Aware Template Matching (OATM) is summarized in Algorithm~\ref{alg:oatm}. It consists of running Algorithm~\ref{alg:basic.hash} for $k$ iteration. If we denote by $P_\alpha$ the success probability of Algorithm~\ref{alg:basic.hash}, given in Equation~\eqref{eq.success.probab.alg1} of Claim~\ref{Clm.main.strong}, it holds that the success probability of Algorithm~\ref{alg:oatm} is at least: \vspace{-6pt}
\begin{equation}\label{eq.success.probab.alg2}
1 - (1-P_\alpha)^k\vspace{-4pt}
\end{equation}
and conversely, the number of iterations $k$ needed in order to succeed with a predefined probability $p_0$ (e.g. 0.99) is: $\log(1-p_0)/\log(1-P_\alpha)$.

It is important to note that the number of iterations $k$ can be determined \emph{adaptively}, based on the findings of previous rounds. As is common in the RANSAC pipeline, every time the best maximal consensus (inlier rate) is updated, the number of required iterations is decreased accordingly.

Notice that the algorithm is generic with respect to the underlying transformation space $F$. It does however require the knowledge of how to efficiently decompose it into a product space (Step~\ref{step.decompose}). We next describe two such constructions for 2D-translations and provide a construction for the 2D-affine group in the supplementary material \cite{OATM2018webpage}.

\subsection{2D-translation constructions}\label{sec:construction_translation}
Recall that at the basis of our algorithm is the decomposition of the transformation search space $F$ into a product of spaces $F_{\epsilon'} \times Net_\epsilon(F)$, controlled by a parameter $\epsilon$. Depending on the structure of the space $F$ ($|F|=N$), we will pick a value of $\epsilon$ (and $\epsilon'$) for which $|F_{\epsilon'}| \approx |Net_\epsilon(F)| \approx \sqrt{N}$, in order to minimize the complexity which depends on the sum of the sizes of the product spaces. We make the decomposition explicit for the case of 2D-translations.

Since no scale is involved, $s(f^*)\hspace{-4pt}=\hspace{-4pt}1$ and hence $\epsilon'\hspace{-4pt}=\hspace{-4pt}\epsilon$. Given a square 
template $T$ and image $I$ of dimensions $m\times m$ and $n\times n$, the subspaces $F_{\epsilon}$ and $Net_\epsilon(F)$ can be constructed using a hexagonal cover of a square by circles of radius $\epsilon$, as is depicted in Figure~\ref{fig.illust.decomp}(a). The sizes of the resulting subspaces $F_{\epsilon}$ and $Net_\epsilon(F)$: $\pi \epsilon^2$ and $(n-m+1)^2/(1.5\sqrt{3}\epsilon^2)$, can be made equal by tuning $\epsilon$.

However, this covering is sub-optimal by a multiplicative factor of $1.5\sqrt{3}$ due to the overlap of circles.
We can actually get a practically optimal decomposition (while not strictly following the $\epsilon$-net definition), as is depicted in Figure~\ref{fig.illust.decomp}(b). We take the product of the sets: $F_{\epsilon}=\{i,j\: : \: i,j\in -\epsilon,\ldots,\epsilon\}$ and $Net_\epsilon(F)=\big\{i,j\: : \: i,j\in \{\epsilon+2k\epsilon\} \; \text{for} \; k = 1,\ldots,\lfloor(n-m+1)/2\epsilon)\rfloor\big\}$. This results in $|F_{\epsilon}|= 4\epsilon^2$ and $|Net_\epsilon(F)|= (n-m+1)^2/(4\epsilon^2)$. Taking $\epsilon=0.5\sqrt{n-m+1}$ yields $|F_{\epsilon}|=|Net_\epsilon(F)|=n-m+1$. 

\section{Empirical validation of the analysis}

\begin{figure}[t]\vspace{-9pt}
\hspace{-16pt}

\includegraphics[width = 1\linewidth]{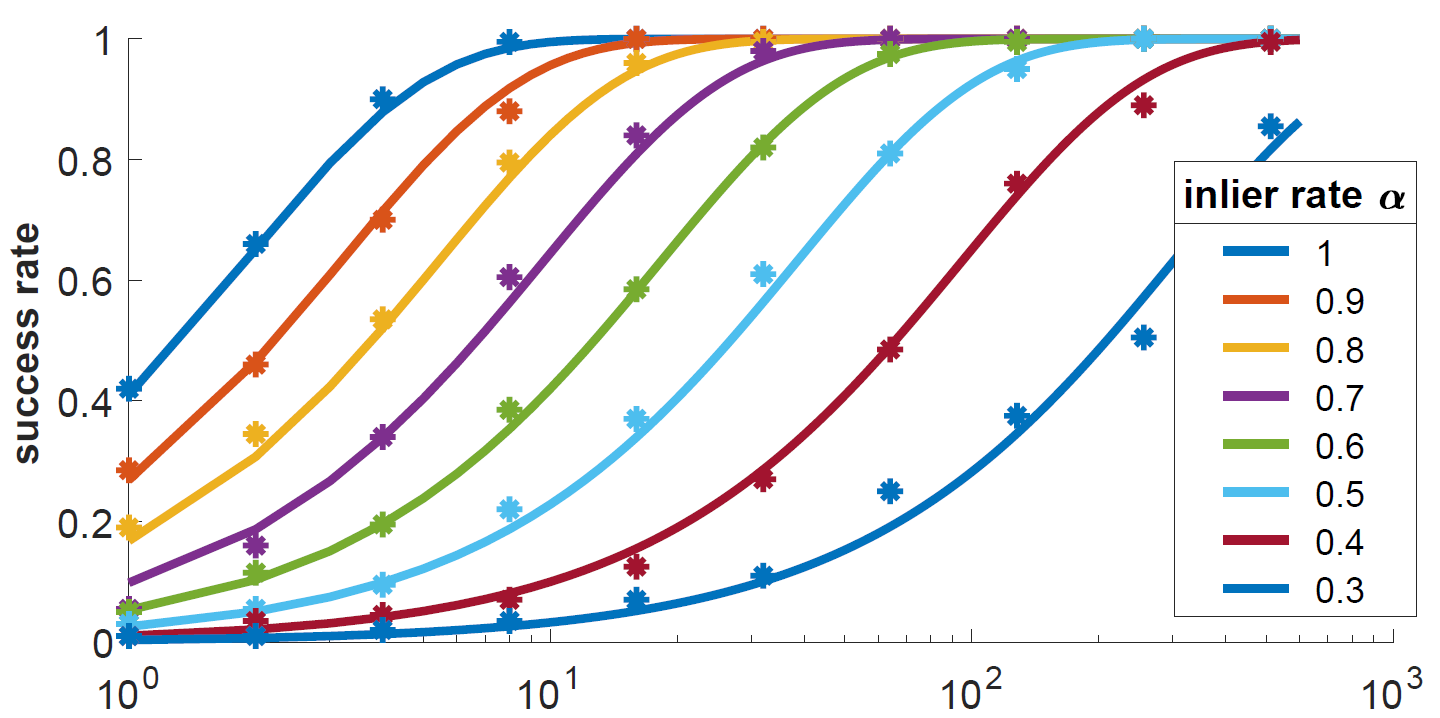}
\caption{\textbf{Empirical validation of Algorithm~\ref{alg:oatm}'s guarantees}. The theoretical success probabilities of OATM as a function of the number of iterations $k$ (solid curves) for different inlier rates $\alpha$ (notice the log-scale x-axis) can be seen to match the algorithm success rates (markers) measured in a large-scale experiment.}\label{fig.simulation}
\vspace{-6pt}
\end{figure}

\vspace{3pt}
\noindent\textbf{Algorithm success rate (2D-translation)}
\vspace{2pt}

\noindent We begin with a large-scale validation of the theoretical guarantees of the algorithm (shown for the 2D-translation case), with each of the number $k$ of iterations in the set $\{1,2,4,8,16,32,64,128,256,512\}$, while the other parameters are kept fixed.

We run $200$ template matching trials for each inlier rate $\alpha$ in the set $\{0.3,0.4,0.5,0.6,0.7,0.8,0.9,1\}$. The success rate reported is the relative number of trials for which an exact match was found. For each trial we created a template matching instance, by first extracting a $100 \times 100$ template $T$ from a $500 \times 500$ image $I$ with grayscale intensities in [0,1], taken (scaled) at random from the Unplash data-set\footnote{A set of 65 high-res images we collected from https://unsplash.com/, which we present in the supplementary material \cite{OATM2018webpage}.}. A random $\alpha$-fraction of the template pixels are labeled as inlier pixels, and the intensity $T(p)$ of each outlier pixel $p$ is replaced with the intensity that is $0.5$ away from it in absolute difference. This setting guarantees that the resulting inlier rate is exactly $\alpha$, and the algorithm succeeds only if it samples a pure set of inliers. Finally, we add to the image $I$ white Gaussian noise with std equivalent of $5$ greylevels.

The results are shown in Figure~\ref{fig.simulation}, where the empirical success rates per $\alpha$ (markers) can be seen to match the  theoretical success rates from Equation~\eqref{eq.success.probab.alg2} (solid curves).
It is important to mention that these are minimal success rates guaranteed for finding the perfect match, which strictly hold, irrespective of the template and image contents, while in practice we often observe significantly better rates.

\begin{figure}[t]\vspace{-9pt}
\hspace{-12pt}\includegraphics[width = 1.09\linewidth]{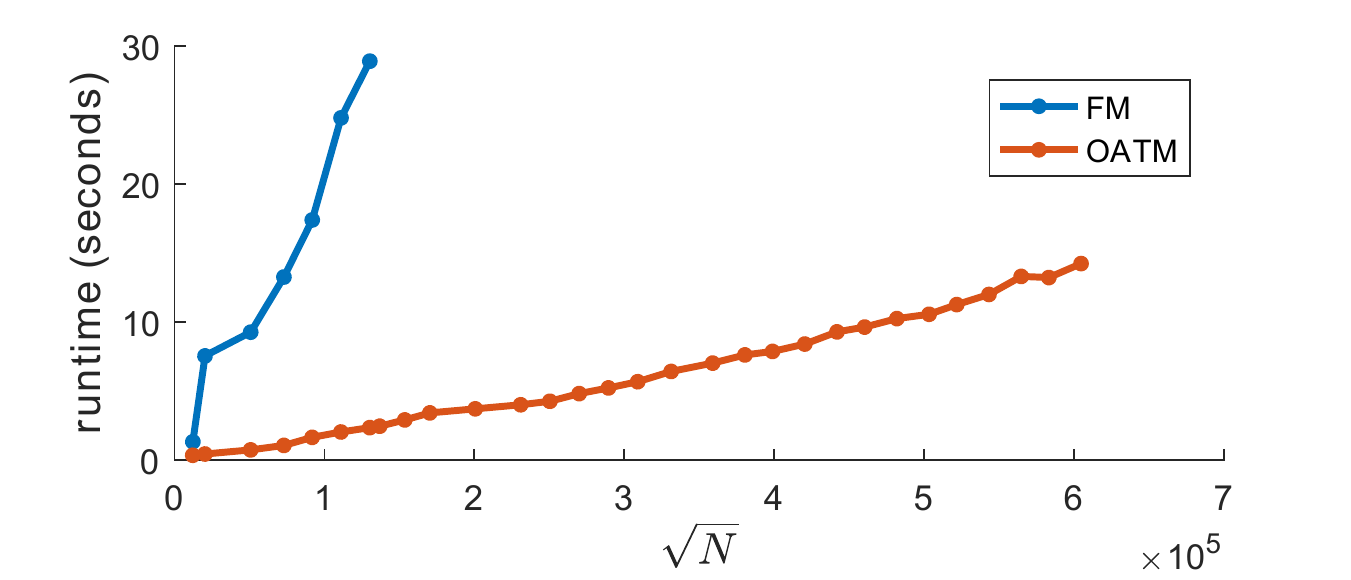} \vspace{-9pt}
\caption{\textbf{Scalability experiment}. OATM is compared empirically to FM, over a 2D-affine search space of size $N$. As expected, the runtime of OATM grows linearly in $\sqrt{N}$, while that of FM is linear in $N$ (notice the $\sqrt{N}$ $x$-axis).}\label{fig.scalability.plots}
\vspace{-6pt}
\end{figure}

\vspace{8pt}
\noindent\textbf{Algorithm scalability (2D-affine)}
\vspace{2pt}

\noindent In this experiment (result shown in Figure~\ref{fig.scalability.plots}) we verify the argued $O(\sqrt{N})$ runtime of our algorithm. A simple way of doing so is by creating a sequence of affine matching instances (see the experiment in Section \ref{sec:2da.synth.performance} for the technical details), were square templates of a fixed side length of $32$ pixels are searched in square images with varying side lengths in the set $\{100,200,300,\ldots,3200\}$, while keeping other affine search limits fixed - scales in the range $[2/3,3/2]$ and rotations in the range [$-\pi/4,\pi/4$]. This leads to a sequence of configuration sizes $N$ that grows quadratically (hence the markers are distributed roughly linearly in the $\sqrt{N}$ $x$-axis).
As can be seen, the runtime of OATM grows linearly with $\sqrt{N}$, and can handle in reasonable time a ratio of up to 100 between template and image dimensions. For reference, the complexity of the Fast-Match (FM) algorithm \cite{korman2013fast}, representing the state-of-the-art in affine template matching, depends on a standard parameterization of the 2D-affine space (whose size grows linearly in $N$ - see \cite{korman2013fast} for details). As can be seen, it cannot cope with ש template-image side length ratio of over 20.

\vspace{-5pt}
\section{Results}\label{sec:results}
\vspace{-3pt}

\begin{figure*}[t!]\vspace{-9pt}
\begin{center}
\addtolength{\tabcolsep}{-14.5pt}
\begin{tabular}{c  c}
    \hspace{9pt}\includegraphics[width = 0.535\textwidth]{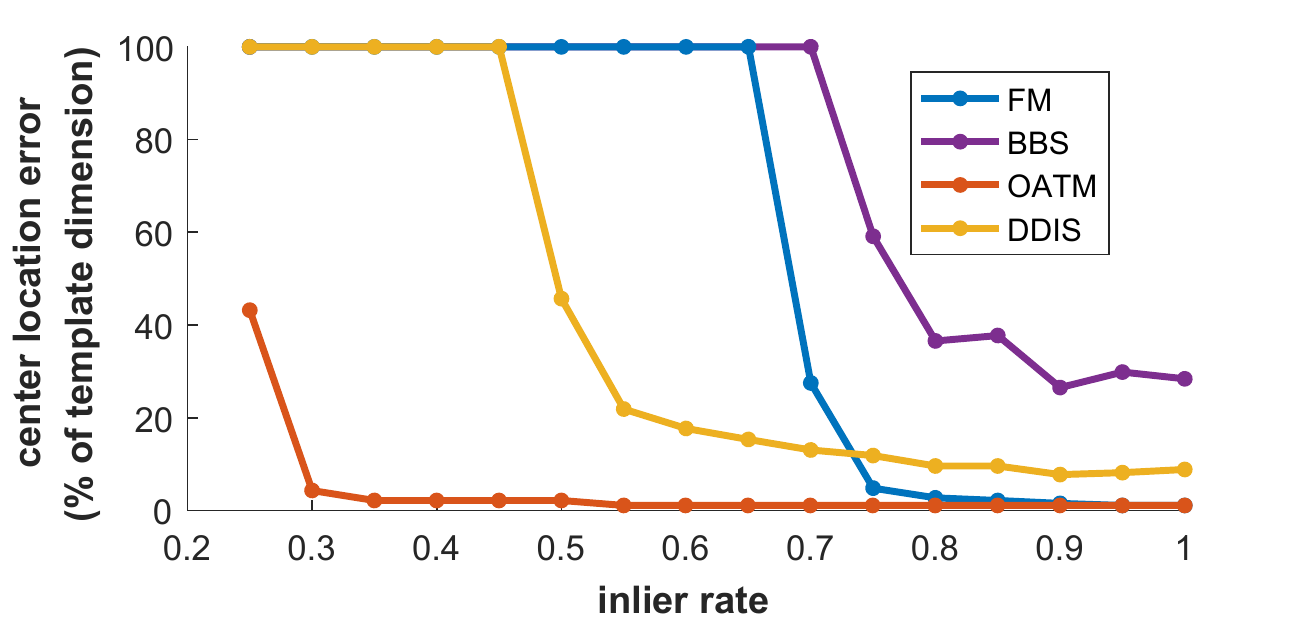} &
    \includegraphics[width = 0.535\linewidth]{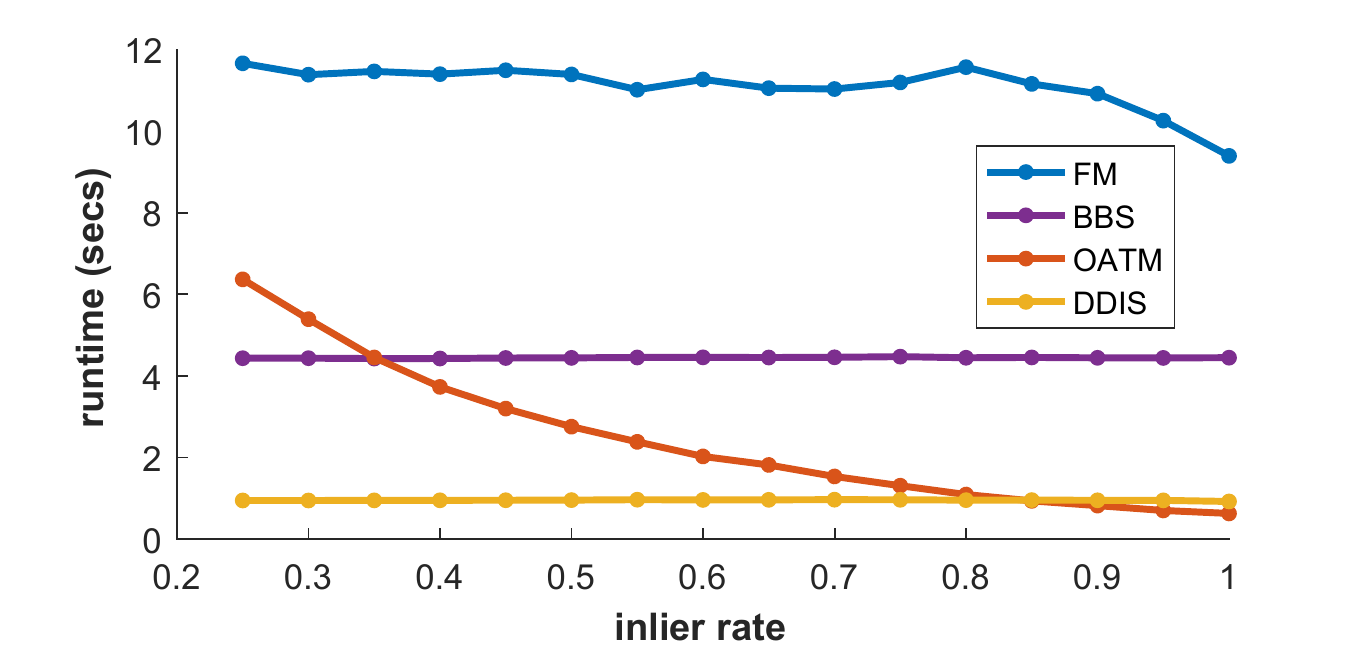} \\
      \vspace{-12pt}
\end{tabular}
\caption{\textbf{Results of the occlusion experiment (Sec. \ref{sec:occlusion.experiment}):} median center location errors (\textbf{left}) and average run-times (\textbf{right)}.}\label{fig.occlusion.plots}
\end{center}\vspace{-15pt}
\end{figure*}

In this section we demonstrate the advantages of the proposed algorithm through several controlled and uncontrolled experiments on real data.

\vspace{-6pt}
\paragraph{Implementation details}
The parameters used in our implementation were chosen by simple coordinate descent over a small set of random synthetic instances (generated as described in Sec. \ref{sec:2da.synth.performance}). For the random grid, we use sample dimension $\hat{d} = 9$; cell dimension $c=2.5t$; where we take the threshold $t=2 \sigma \sqrt{2/\pi}$ (twice the mean of a zero-mean folded-normal-distribution), given a noise level of $\sigma$, or $t=10$ greylevels when it is unknown.
Our method can provide affine photometric invariance, i.e., global brightness and contrast changes, by standardizing the vector sets $U$ and $V$ (in step 2 of Algorithm \ref{alg:oatm}) to have the mean and standard deviation of the template.
%

\subsection{Template matching evaluation}\label{sec:2da.synth.performance}

We test our algorithm in a standard template matching evaluation, not involving occlusions, in order to compare to other algorithms, such as Fast-Match (FM) \cite{korman2013fast} representing state-of-the-art in affine template matching. We run a large-scale comparison, using different combinations of template and image sizes (a larger gap between their sizes implies a larger size $N$ of the search space). We will use the following shorthands for template and image dimensions: T1 for $16\times 16$, T2 for $32\times 32$ and T3 for $64\times 64$. Likewise: I1 for $160\times 160$, I2 for $320\times 320$ and I3 for $640\times 640$.

For each template-image size combination, we ran 100 random template matching trials. Each trial (following~\cite{korman2013fast}) involves selecting a random image (here, from the Unplash data-set) and a random
affine transformation (parallelogram in the image).
The template is created by inverse-warping the parallelogram and white gaussian noise with $5$ graylevels equivalent std is added to the image.

For each trial we report average overlap errors and runtimes. The overlap error is a scalar in $[0,1]$ given by 1 minus the ratio between the intersection and union of the detected and true target parallelograms.

The results are summarized in Table \ref{tbl:template-image.dims}. OATM is typically an order of magnitude faster than FM, at similar low error levels. FM cannot deal with the setting T1-I3, due to the large number of configurations $N$ (the image edge length is 40 times the template edge length), while OATM deals with a more tolerable size of $\sqrt{N}$.

\begin{table}[h!] \vspace{-1pt}
  \centering
  \addtolength{\tabcolsep}{-5.7pt}{\footnotesize
  \begin{tabular}{|c|c|ccccccccc|}
    \cline{3-11}
    \multicolumn{1}{r}{}&&&\multicolumn{7}{c}{\footnotesize{\textbf{template-image sizes}}}&\\
    \hline
    \footnotesize{}&
    &\footnotesize{ T1-I1 }&\footnotesize{ T1-I2 }&\footnotesize{ T1-I3 }&\footnotesize{ T2-I1 } &\footnotesize{ T2-I2 }&\footnotesize{ T2-I3 }&\footnotesize{ T3-I1 }&\footnotesize{ T3-I2 }&\footnotesize{ T3-I3 } \\
    \hline
    \hline
    \multirow{2}{*}{\footnotesize{FM}} 
& \footnotesize{{err.}} & \footnotesize{0.09} & \footnotesize{0.13} & \footnotesize{NA} & \footnotesize{0.05} & \footnotesize{0.05} & \footnotesize{0.09} & \footnotesize{0.02} & \footnotesize{0.01} & \footnotesize{0.03} \\
& \footnotesize{{time}} & \footnotesize{12.22} & \footnotesize{25.37} & \footnotesize{NA} & \footnotesize{4.35} & \footnotesize{7.78} & \footnotesize{32.07} & \footnotesize{1.33} & \footnotesize{1.90} & \footnotesize{11.61} \\
    \hline
    \multirow{2}{*}{\:\footnotesize{OATM}\:}
& \footnotesize{{err.}} & \footnotesize{0.07} & \footnotesize{0.10} & \footnotesize{0.13} & \footnotesize{0.02} & \footnotesize{0.04} & \footnotesize{0.04} & \footnotesize{0.01} & \footnotesize{0.02} & \footnotesize{0.13} \\
& \footnotesize{\:{time}\:} & \footnotesize{0.15} & \footnotesize{0.18} & \footnotesize{0.39} & \footnotesize{0.53} & \footnotesize{0.76} & \footnotesize{1.73} & \footnotesize{0.51} & \footnotesize{0.64} & \footnotesize{1.01} \\
    \hline
  \end{tabular}}\vspace{5.5pt}
  \caption{\textbf{Template matching evaluation} for different template image sizes, including average runtime (seconds) and overlap error.} \vspace{-8pt}
  \label{tbl:template-image.dims}
\end{table}

\subsection{Robustness to occlusions}\label{sec:occlusion.experiment}

In this experiment, we evaluate how well OATM and several other methods deal with occlusion. We repeat the protocol from the previous experiment (Section~\ref{sec:2da.synth.performance}), except that we take a fixed template-image size (T2-I2) and we synthetically introduce a controlled amount of outlier pixels. One way of doing so (see examples in Figure \ref{fig.occlusion.types}) is by introducing random $4 \times 4$ blocks. We repeated the experiment with two other ways of introducing occlusion, resulting in similar results, which we provide in the supplementary material \cite{OATM2018webpage}. These come to show that our method is robust to the spatial arrangement of the occlusion mask.

In addition to Fast-Match (FM) \cite{korman2013fast}, we compare with two additional template matching methods - Best Buddies Similarity (BBS) \cite{dekel2015best} and Deformable Diversity Similarity (DDIS) \cite{talmi2016template}, both specialized in handling complex geometric deformations and high levels of occlusion. For a fair comparison, since BBS and DDIS match the\ template in a sliding window fashion (and account for deformation within the window), we measure center location errors (rather than overlap error) - the distance between the center of the target window and the true target center location, as a percentage of the template dimension (clipped at 100\%).

The plots in Figure~\ref{fig.occlusion.plots} summarize the experiment. OATM can be seen to provide the most accurate detections at a very wide range of inlier rates, starting\ from around 0.25. DDIS can handle inlier rates of above 0.5, but is slightly less accurate in localization due to its sliding window search. FM was not designed to handle occlusions explicitly and fails to do so for inlier rates under 0.75. BBS does not handle inlier rates under 0.75 and its localization is suboptimal when dealing with the affine deformations in this setting.

In terms of speed, DDIS is clearly the most efficient. DDIS and BBS are agnostic of the inlier rate, while the runtime of OATM is inverse proportional to the inlier rate, due to its RANSAC-like adaptive stopping criterion.

\subsection{Matching partially occluded deformed patches} \label{sec:results.Hpatches}

In this experiment we use the recent HPatches~\cite{hpatches_2017_cvpr} data-set, which was designed for benchmarking modern local image descriptors.
The patches were extracted from 116 sequences (59 with changing viewpoint, 57 with changing illumination), each containing 6 images of a planar scene with known geometric correspondence given by a 2D homography. Approximately 1300 square 65 $\times$ 65 reference patches (rectified state-of-the-art affine detected regions) are extracted from the first image in each sequence. The exact set of corresponding patches were then extracted from the 5 other sequence images, using the ground-truth projection, while introducing 3 levels (Easy, Hard, Tough) of controlled geometric perturbation (rotation, anisotropic scaling and translation), to simulate the location inaccuracies of current feature detectors.

These perturbations introduce significant geometric deformations (e.g. rotation of up to $10^\circ/20^\circ/30^\circ$) as well as increasing levels of occlusion (average overlap of $78\%/63\%/51\%$) for the Easy/Hard/Tough cases.
Figure \ref{fig.hpatches.data} shows several examples of extracted reference patches and their matching patches at the different levels of difficulty.

\begin{figure}[h!]\vspace{-3pt}
\begin{center}
    \includegraphics[width = 1\linewidth]{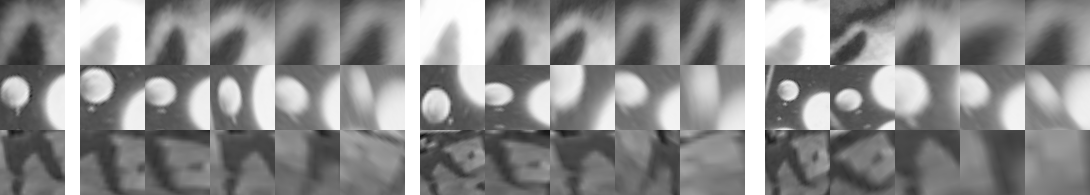} \\  \vspace{2pt}
    \includegraphics[width = 1\linewidth]{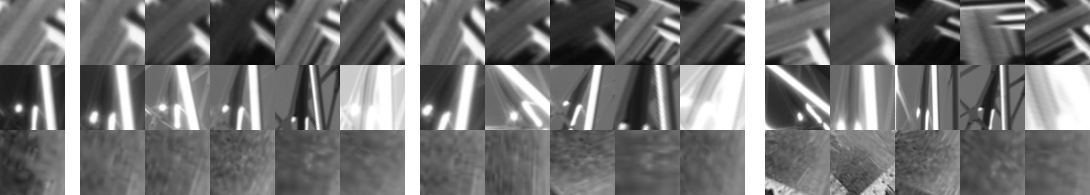} \\  \vspace{2pt}
    $\hspace{-2.3pt}$ ref $\hspace{3pt}$ E1 $\hspace{-1.8pt}$ E2 $\hspace{-1.8pt}$ E3 $\hspace{-1.8pt}$ E4 $\hspace{-1.8pt}$ E5 $\;$ H1 $\hspace{-2.1pt}$ H2 $\hspace{-2.1pt}$ H3 $\hspace{-2.1pt}$ H4 $\hspace{-2.1pt}$ H5 $\;$ T1 $\hspace{-2pt}$ T2 $\hspace{-2pt}$ T3 $\hspace{-2pt}$ T4 $\hspace{-2pt}$ T5
\caption{\textbf{Samples from the HPatches~\cite{hpatches_2017_cvpr} dataset.} \emph{viewpoint} sequences (rows 1-3) and \emph{illumination} sequences (rows 4-6).}\label{fig.hpatches.data}
\end{center}\vspace{-12pt}
\end{figure}

This data is useful in showing the capabilities of our method in handling such challenges, in comparison with the common practice of matching features by their descriptors.
We focus on the proposed `matching' task \cite{hpatches_2017_cvpr}, in which each reference patch needs to be located among each of the patches of each sequence image. A template matching algorithm cannot strictly follow the suggested task protocol, which was defined for matching patches by their descriptors. Instead, we pack all the ($\sim$1300) square target patches into a single image in which we search for the template using the photometric invariant version of OATM. The target patch chosen is the one which contains the center location of the warped template patch. For mean-Average-Precision (mAP) calculation, since our method only produces a single target patch we assign a weight of 1 to the detected target patch and 0 to the rest. 

The results are summarized in Table \ref{tbl:Hpatches_results}. The reference descriptor methods include {SIFT}~\cite{lowe1999object} and its variant {RSIFT}~\cite{arandjelovic2012three}, the binary descriptors {BRIEF}~\cite{calonder2010brief} and {ORB}~\cite{rublee2011orb} and the deep descriptors DeepDesc ({DDESC})~\cite{simo2015discriminative} and TFeat ratio* ({TF-R})~\cite{balntas2016learning}. For SIFT, TF-R, DDESC and RSIFT, results are given for the superior whitened and normalized versions of the descriptors (as reported in ~\cite{hpatches_2017_cvpr}).


\begin{table}[h!] \vspace{-5pt}
  \centering
  \addtolength{\tabcolsep}{-2.2pt}{\small
  \begin{tabular}{|c|c|c|c|c|c|c|}
    \cline{1-7}
    &\multicolumn{3}{|c|}{\small{\textbf{viewpoint seqs}}}
    &\multicolumn{3}{|c|}{\small{\textbf{illumination seqs}}}\\
    \hline
    \small{\textbf{method}}&\small{ Easy }&\small{ Hard }&\small{ Tough }&\small{ Easy }&\small{ Hard }&\small{ Tough } \\
    \hline
\hline
{\small{BRIEF} \cite{calonder2010brief}} & {}\small{25.6 } & {}\small{6.9 } & {}\small{2.4 }  &
{}\small{20.5 } & {}\small{5.9 } & {}\small{2.0 }   \\
{\small{ORB} \cite{rublee2011orb}} & {}\small{36.4} & {}\small{11.1 } & {}\small{3.7 }  &
{}\small{28.9 } & {}\small{8.8 } & {}\small{3.2 }  \\
{\small{SIFT} \cite{lowe1999object}} & {}\small{59.4 } & {}\small{30.6 } & {}\small{15.3 } &
{}\small{52.6 } & {}\small{26.1 } & {}\small{13.3 }   \\
{\small{TF-R} \cite{balntas2016learning}} & {}\small{58.9 } & {}\small{35.5 } & {}\small{19.0 } &
{}\small{48.5 } & {}\small{28.6 } & {}\small{15.6 }   \\
{\small{DDESC} \cite{simo2015discriminative}} & {}\small{58.6 } & {}\small{36.0 } & {}\small{20.2 }  &
{}\small{50.7 } & {}\small{30.0} & {}\small{17.0 }   \\
{\small{RSIFT} \cite{arandjelovic2012three}} & {}\small{64.0} & {}\small{35.2} & {}\small{18.5}  &
{}\small{\textbf{57.1} } & {}\small{\textbf{30.2 }} & {}\small{15.9 }  \\
{\small{OATM}} & {}\small{\textbf{72.7} } & {}\small{\textbf{49.2}} & {}\small{\textbf{32.1}} &
{}\small{43.3 } & {}\small{29.3} & {}\small{\textbf{19.7}}  \\
\hline
%
%
  \end{tabular}}\vspace{4pt}
  \caption{\textbf{Results on the HPatches \cite{hpatches_2017_cvpr} Image Matching benchmark.} Results are in terms of mean-Average-Precision (mAP), where all results except that of OATM were reported in \cite{hpatches_2017_cvpr}.} \vspace{-7pt}
  \label{tbl:Hpatches_results}
\end{table} 

Clearly, for both viewpoint and illumination sequences - the mAP of OATM deteriorates more gracefully with the increase in geometric deformation and level of occlusion, compared to the descriptor based methods. While the state-of-the-art features and descriptors may be highly insensitive to certain local geometric deformations and different photometric variations (and hence some outperform OATM in the Easy illumination case), they are not as effective in dealing with significant deformation and occlusion, unlike OATM which explicitly explores the space of affine deformations and reasons about substantial occlusion levels.

Furthermore, the naive current application of OATM on this data suggests that performance could be further improved by: (i) finding a distribution over target locations rather than one single detection; (ii) being aware of the patch structure of the stacked target image; (iii) using advanced representations instead of the greylevel pixelwise description.
That being said, unlike the descriptor based methods, the template matching nature of OATM is certainly not suitable for large-scale matching, where a large pool of patches needs to be matched against another. Nevertheless, many of the ideas presented here could be possibly adapted, e.g. to the image-to-image matching setup.

\vspace{-3pt}
\section{Conclusions}
\vspace{-2pt}

We have presented a highly efficient algorithm for 2D-affine template matching that is carefully analyzed and is shown to improve on previous methods in handling high levels of occlusion and geometric deformation.

The results on the HPatches data-set raise the question of whether descriptor based matching is able to handle the geometric deformations and high occlusion levels that are inherent in the localization noise introduced by feature detectors. This is the case even in the advent of deep learning, and the development of methods that can explicitly reason for deformation and occlusion seems to be necessary for improving the state-of-the-art in visual correspondence.

\vspace{-8pt}
\paragraph{Acknowledgement}Research supported by ARO W91 1NF-15-1-0564/66731-CS, ONR N00014-17-1-2072 and a gift from Northrop Grumman.
\vspace{-2pt}

{\small
\bibliographystyle{ieee}
\bibliography{egbib}

\begin{thebibliography}{10}\itemsep=-1pt

\bibitem{aiger2013reporting}
D.~Aiger, H.~Kaplan, and M.~Sharir.
\newblock Reporting neighbors in high-dimensional euclidean space.
\newblock In {\em Proceedings of the Twenty-Fourth Annual ACM-SIAM Symposium on
  Discrete Algorithms}, pages 784--803, 2013.

\bibitem{aiger2013random}
D.~Aiger, E.~Kokiopoulou, and E.~Rivlin.
\newblock Random grids: Fast approximate nearest neighbors and range searching
  for image search.
\newblock In {\em Proceedings of the IEEE International Conference on Computer
  Vision}, pages 3471--3478, 2013.

\bibitem{arandjelovic2012three}
R.~Arandjelovi{\'c} and A.~Zisserman.
\newblock Three things everyone should know to improve object retrieval.
\newblock In {\em Computer Vision and Pattern Recognition (CVPR), 2012 IEEE
  Conference on}, pages 2911--2918. IEEE, 2012.

\bibitem{hpatches_2017_cvpr}
V.~Balntas, K.~Lenc, A.~Vedaldi, and K.~Mikolajczyk.
\newblock Hpatches: A benchmark and evaluation of handcrafted and learned local
  descriptors.
\newblock In {\em CVPR}, 2017.

\bibitem{balntas2016learning}
V.~Balntas, E.~Riba, D.~Ponsa, and K.~Mikolajczyk.
\newblock Learning local feature descriptors with triplets and shallow
  convolutional neural networks.
\newblock In {\em BMVC}, volume~1, 2016.

\bibitem{Baque_2017_ICCV}
P.~Baque, F.~Fleuret, and P.~Fua.
\newblock Deep occlusion reasoning for multi-camera multi-target detection.
\newblock In {\em The IEEE International Conference on Computer Vision (ICCV)},
  Oct 2017.

\bibitem{calonder2010brief}
M.~Calonder, V.~Lepetit, C.~Strecha, and P.~Fua.
\newblock Brief: Binary robust independent elementary features.
\newblock {\em Computer Vision--ECCV 2010}, pages 778--792, 2010.

\bibitem{dekel2015best}
T.~Dekel, S.~Oron, M.~Rubinstein, S.~Avidan, and W.~T. Freeman.
\newblock Best-buddies similarity for robust template matching.
\newblock In {\em 2015 IEEE Conference on Computer Vision and Pattern
  Recognition (CVPR)}, pages 2021--2029. IEEE, 2015.

\bibitem{detone2016deep}
D.~DeTone, T.~Malisiewicz, and A.~Rabinovich.
\newblock Deep image homography estimation.
\newblock {\em arXiv preprint arXiv:1606.03798}, 2016.

\bibitem{elboer2013generalized}
E.~Elboer, M.~Werman, and Y.~Hel-Or.
\newblock The generalized laplacian distance and its applications for visual
  matching.
\newblock In {\em Proceedings of the IEEE Conference on Computer Vision and
  Pattern Recognition}, pages 2315--2322, 2013.

\bibitem{Fredriksson-PhD}
K.~Fredriksson.
\newblock {\em Rotation Invariant Template Matching}.
\newblock PhD thesis, University of Helsinki, 2001.

\bibitem{han2015matchnet}
X.~Han, T.~Leung, Y.~Jia, R.~Sukthankar, and A.~C. Berg.
\newblock Matchnet: Unifying feature and metric learning for patch-based
  matching.
\newblock In {\em Proceedings of the IEEE Conference on Computer Vision and
  Pattern Recognition}, pages 3279--3286, 2015.

\bibitem{hel2014matching}
Y.~Hel-Or, H.~Hel-Or, and E.~David.
\newblock Matching by tone mapping: Photometric invariant template matching.
\newblock {\em IEEE transactions on pattern analysis and machine intelligence},
  36(2):317--330, 2014.

\bibitem{Hur_2017_ICCV}
J.~Hur and S.~Roth.
\newblock Mirrorflow: Exploiting symmetries in joint optical flow and occlusion
  estimation.
\newblock In {\em The IEEE International Conference on Computer Vision (ICCV)},
  Oct 2017.

\bibitem{joshi2007synthetic}
N.~Joshi, S.~Avidan, W.~Matusik, and D.~J. Kriegman.
\newblock Synthetic aperture tracking: tracking through occlusions.
\newblock In {\em 2007 IEEE 11th International Conference on Computer Vision},
  pages 1--8. IEEE, 2007.

\bibitem{OATM2018webpage}
S.~Korman.
\newblock Occlusion aware templae matching webpage.
\newblock \url{http://www.eng.tau.ac.il/~simonk/OATM}.

\bibitem{korman2013fast}
S.~Korman, D.~Reichman, G.~Tsur, and S.~Avidan.
\newblock Fast-match: Fast affine template matching.
\newblock In {\em Proceedings of the IEEE Conference on Computer Vision and
  Pattern Recognition}, pages 2331--2338, 2013.

\bibitem{leone1961folded}
F.~Leone, L.~Nelson, and R.~Nottingham.
\newblock The folded normal distribution.
\newblock {\em Technometrics}, 3(4):543--550, 1961.

\bibitem{lowe1999object}
D.~G. Lowe.
\newblock Object recognition from local scale-invariant features.
\newblock In {\em Computer vision, 1999. The proceedings of the seventh IEEE
  international conference on}, volume~2, pages 1150--1157. Ieee, 1999.

\bibitem{nguyen2017unsupervised}
T.~Nguyen, S.~W. Chen, S.~S. Shivakumar, C.~J. Taylor, and V.~Kumar.
\newblock Unsupervised deep homography: A fast and robust homography estimation
  model.
\newblock {\em arXiv preprint arXiv:1709.03966}, 2017.

\bibitem{Osherov_2017_ICCV}
E.~Osherov and M.~Lindenbaum.
\newblock Increasing cnn robustness to occlusions by reducing filter support.
\newblock In {\em The IEEE International Conference on Computer Vision (ICCV)},
  2017.

\bibitem{ouyang2012performance}
W.~Ouyang, F.~Tombari, S.~Mattoccia, L.~Di~Stefano, and W.-K. Cham.
\newblock Performance evaluation of full search equivalent pattern matching
  algorithms.
\newblock {\em IEEE transactions on pattern analysis and machine intelligence},
  34(1):127--143, 2012.

\bibitem{rublee2011orb}
E.~Rublee, V.~Rabaud, K.~Konolige, and G.~Bradski.
\newblock Orb: An efficient alternative to sift or surf.
\newblock In {\em Computer Vision (ICCV), 2011 IEEE international conference
  on}, pages 2564--2571. IEEE, 2011.

\bibitem{simo2015discriminative}
E.~Simo-Serra, E.~Trulls, L.~Ferraz, I.~Kokkinos, P.~Fua, and F.~Moreno-Noguer.
\newblock Discriminative learning of deep convolutional feature point
  descriptors.
\newblock In {\em Proceedings of the IEEE International Conference on Computer
  Vision}, pages 118--126, 2015.

\bibitem{talmi2016template}
I.~Talmi, R.~Mechrez, and L.~Zelnik-Manor.
\newblock Template matching with deformable diversity similarity.
\newblock In {\em The IEEE Conference on Computer Vision and Pattern
  Recognition (CVPR)}. IEEE, 2017.

\bibitem{tsai2002rotation}
D.~Tsai and C.~Chiang.
\newblock Rotation-invariant pattern matching using wavelet decomposition.
\newblock {\em Pattern Recognition Letters}, 23(1):191--201, 2002.

\bibitem{yang2015coarse}
Y.~Yang, Z.~Lu, and G.~Sundaramoorthi.
\newblock Coarse-to-fine region selection and matching.
\newblock In {\em 2015 IEEE Conference on Computer Vision and Pattern
  Recognition (CVPR)}, pages 5051--5059. IEEE, 2015.

\bibitem{yang2015self}
Y.~Yang, G.~Sundaramoorthi, and S.~Soatto.
\newblock Self-occlusions and disocclusions in causal video object
  segmentation.
\newblock In {\em Proceedings of the IEEE International Conference on Computer
  Vision}, pages 4408--4416, 2015.

\bibitem{zhang2015fast}
C.~Zhang and T.~Akashi.
\newblock Fast affine template matching over galois field.
\newblock In {\em BMVC}, pages 121--1, 2015.

\bibitem{zhang2014partial}
T.~Zhang, K.~Jia, C.~Xu, Y.~Ma, and N.~Ahuja.
\newblock Partial occlusion handling for visual tracking via robust part
  matching.
\newblock In {\em 2014 IEEE Conference on Computer Vision and Pattern
  Recognition}, pages 1258--1265. IEEE, 2014.

\bibitem{zhang2015robust}
T.~Zhang, S.~Liu, N.~Ahuja, M.-H. Yang, and B.~Ghanem.
\newblock Robust visual tracking via consistent low-rank sparse learning.
\newblock {\em International Journal of Computer Vision}, 111(2):171--190,
  2015.

\end{thebibliography}
}

\end{document}